\let\oldvec\vec
\documentclass{llncs}
\let\vec\oldvec

\usepackage{amsfonts}
\usepackage{latexsym}
\usepackage{amssymb}
\usepackage{graphicx,psfrag}
\usepackage{fullpage}
\usepackage{framed}
\usepackage{verbatim}
\usepackage{color}
\usepackage{epsfig}
\usepackage{authblk}
\usepackage{hyperref}
\usepackage{url}
\usepackage{geometry}
\usepackage{mathtools}
\usepackage{float}
\usepackage{enumerate}
\oddsidemargin=17pt \evensidemargin=17pt
\headheight=9pt     \topmargin=26pt
\textheight=554pt   \textwidth=433.8pt
\parskip=0pt plus 1pt



%
%


\newcommand\NN{\mathbb{Z}_{\geq 0}}

\newcommand\R{\mathbb{R}}



\newcommand{\ra}{\rightarrow}

\newcommand{\e}{\mathrm{e}}

\begin{document}
\title{A Scheme for Molecular Computation of Maximum Likelihood Estimators for Log-Linear Models}
\author{Manoj Gopalkrishnan}

\institute{Tata Institute of Fundamental Research, Mumbai 400 005, India\\
\email{manoj.gopalkrishnan@gmail.com},
\texttt{\url{http://www.tcs.tifr.res.in/~manoj}}}

\date{10 June 2016}
\maketitle

\begin{abstract}
We propose a novel molecular computing scheme 
for statistical inference. We focus 
on the much-studied statistical inference problem 
of computing maximum likelihood estimators 
for log-linear models. Our scheme takes log-linear
models to reaction systems, and the observed data to
initial conditions, so that the corresponding equilibrium of 
each reaction system encodes the corresponding 
maximum likelihood estimator.
The main idea is to exploit the coincidence between thermodynamic entropy
and statistical entropy. We map a Maximum Entropy characterization of 
the maximum likelihood estimator onto a Maximum Entropy characterization of 
the equilibrium concentrations for the reaction system. This allows for an efficient encoding of
the problem, and reveals that reaction networks are
superbly suited to statistical inference tasks.
Such a scheme may also provide a template to 
understanding how {\em in vivo} biochemical signaling pathways
integrate extensive information about their 
environment and history.
\end{abstract}

\section{Introduction}
The sophisticated behavior of cells emerges from the computations that are being performed by the underlying biochemical reaction networks. These biochemical pathways have been studied in a ``top-down'' manner, by looking for recurring motifs, and signs of modularity \cite{milo2002network}. There is also an opportunity to study these pathways in a ``bottom-up'' manner by proposing primitive building blocks which can be composed to create interesting and technologically valuable behavior. This ``bottom-up'' approach connects with work in the Molecular Computation community whose goal is to generate sophisticated behavior using  DNA hybridization reactions~\cite{seesawgates,qian2011scaling,napp2013message,yordanov2014computational,soloveichik2010dna,Cardelli_2011StrandAlgebra,cardelli2013two,qian2011efficient,benenson2004autonomous,shapiro2006bringing} and other Artificial Chemistry approaches~\cite{buisman2009computing,daniel2013synthetic}. 

We propose a new building block for molecular computation. We show that the mathematical structure of reaction networks is particularly well adapted to compute Maximum Likelihood Estimators for log-linear models, allowing a pithy encoding of such computations by reactions. According to \cite{fienberg2012maximum}: \begin{quotation}Log-linear models are arguably the most popular and important statistical models for the analysis of categorical data; see, for example, Bishop, Fienberg and Holland (1975)~\cite{bishop1975discrete}, Christensen (1997)~\cite{christensen1997log}. These powerful models, which include as special cases graphical models [see, e.g., Lauritzen (1996)~\cite{lauritzen1996graphical}] as well as many logit models [see, e.g., Agresti (2002)~\cite{agresti2013categorical}, Bishop, Fienberg and Holland (1975)~\cite{bishop1975discrete}], have applications in many scientific areas, ranging from social and biological sciences, to privacy and disclosure limitation problems, medicine, data mining, language processing and genetics. Their popularity has greatly increased in the last decades...\end{quotation}

In order to respond in a manner that maximizes fitness, a cell has to correctly estimate the overall state of its environment. Receptors that sit on cell walls collect a large amount of information about the cellular environment. Processing and integration of this spatially and temporally extensive and diverse information is carried out in the biochemical reaction pathways. We propose that this processing and integration may be advantageously viewed from the lens of machine learning. 

Our proposal entails that {\em schemes for statistical inference by reaction networks are of biological significance, and are deserving of as thorough and extensive a study as schemes for statistical inference by neural networks.} In particular, machine learning is not just a tool for the analysis of biochemical data, but theoretical and technological insights from machine learning could provide a deep and fundamental way, and perhaps ``the'' correct way, to think about biochemical networks. We view the scheme we present here as a promising first step in this program of applying machine learning insights to biochemical networks.\
\\\\
\textbf{The problem:} We illustrate the main ideas of our scheme with an example. Following \cite{pachter2005algebraic}, consider the \textbf{log-linear model} (also known as toric model) described by the \textbf{design matrix} $A = \tiny\left(\begin{array}{ccc}
2&1&0\\
0&1&2
\end{array}\right)$. This means that we are observing an event with three possible mutually exclusive outcomes, call them $X_1, X_2$, and $X_3$, which represent respectively the columns of $A$. The rows of $A$ represent ``hidden variables'' $\theta_1$ and $\theta_2$ respectively which parametrize the statistics of the outcomes in the following way specified by the columns of $A$: 
\begin{align*}
P[X_1\mid \theta_1, \theta_2] &\propto \theta_1^2\
\\P[X_2\mid\theta_1, \theta_2] &\propto \theta_1\theta_2\
\\P[X_3\mid\theta_1, \theta_2] &\propto \theta_2^2
\end{align*}
where the constant of proportionality normalizes the probabilities so they sum to $1$. \footnote{It is more common in statistics and statistical mechanics literature to write $\theta_1 = \e^{-E_1}$ and $\theta_2=\e^{-E_2}$ in terms of ``energies'' $E_1, E_2$ so that $P[X_2\mid E_1, E_2] \propto \e^{-E_1-E_2}$ for example.}

Suppose several independent trials are carried out, and the outcome $X_1$ is observed $x_1\in (0,1)$ fraction of the time, the outcome $X_2$ is observed $x_2\in(0,1-x_1)$ fraction of the time, and the outcome $X_3$ is observed $x_3 = 1 - x_1 - x_2$ fraction of the time. We wish to find the maximum likelihood estimator $(\hat{\theta}_1,\hat{\theta}_2)\in\mathbb{R}^2_{> 0}$ of the parameter $(\theta_1,\theta_2)$, i.e., that value of $\theta$ which maximizes the likelihood of the observed data. 

\textbf{Our contribution:} We describe a scheme that takes the design matrix $A$ to a reaction network that solves the maximum likelihood estimation problem. In Definition~\ref{def:mlenetwork}, we describe our scheme for every matrix $A$ over the integers with all column sums equal. All our results hold in this generality.

\begin{itemize}
\item In Definition~\ref{def:mlenetwork}.\ref{thm:MLD}, we show how to obtain from the matrix $A$, a reaction network that computes the maximum likelihood distribution. Specialized to our example, note that the kernel of the matrix $A$ is spanned by the vector $(1, -2, 1)^T$. We encode this by the reversible reaction 
\[
X_1 + X_3 \xrightleftharpoons[1]{1} 2X_2
\] 
\item In Theorem~\ref{thm:MLD}, we show that if this reversible reaction is started at initial concentrations $X_1(0) = x_1, X_2(0) = x_2, X_3(0) = x_3$, and the dynamics proceeds according to the law of mass action with all specific rates set to $1$:
\begin{align*}
\dot{X}_1(t) = \dot{X}_3(t) = - X_1(t)X_3(t) + X_2^2(t), &&\dot{X}_2(t) = -2X_2^2(t) + 2X_1(t)X_3(t)
\end{align*}
then the reaction reaches equilibrium $(\hat{x}_1,\hat{x}_2,\hat{x}_3)$ where $\hat{x}_1 + \hat{x}_2 + \hat{x}_3 = 1$ and $\hat{x}_1 \propto \hat{\theta}_1^2$, $\hat{x}_2\propto\hat{\theta}_1\hat{\theta}_2$, and $\hat{x}_3\propto\hat{\theta}_2^2$, so that $(\hat{x}_1, \hat{x}_2, \hat{x}_3)$ represents the probability distribution over the outcomes $X_1, X_2, X_3$ at the maximum likelihood $\hat{\theta}_1,\hat{\theta}_2$.

\item This part of our scheme involves only reversible reactions, and requires no catalysis (see \cite[Theorem~5.2]{ManojCatalysis} and Lemma~\ref{lem:issaturated}). One difficulty with implementing such schemes has been that empirical control over kinetics is rather poor. Exquisitely setting the specific rates of individual reactions to desired values is very tricky, and requires a detailed understanding of molecular dynamics. Our scheme avoids this problem since any choice of specific rates that leads to the same equilibrium will do. Hence we can freely set the specific rates so long as the equilibrium constants (ratio of forward and backward specific rates) have value $1$. This is an equilibrium thermodynamic condition that is much easier to ensure in vitro. This combination of reversible reactions, no catalysis, and robustness to the values of the specific rates may make this scheme particularly easy and efficient to implement.

\item In Definition~\ref{def:mlenetwork}.\ref{def:R_MLE}, we show how to obtain from the matrix $A$ a reaction network that computes the maximum likelihood estimator. Specialized to our example, we obtain the reaction network with $5$ species $X_1, X_2, X_3, \theta_1, \theta_2$ and the $5$ reactions: 
\begin{align*}
X_1 + X_3 \rightleftharpoons 2X_2, &&2\theta_1\to 0, &&X_1\to X_1 + 2\theta_1,\
\\&&\theta_1 + \theta_2\to 0, &&X_2\to X_2 + \theta_1 + \theta_2.
\end{align*}
The number of species equals the number of rows plus the number of columns of $A$. The reactions are not uniquely determined by the problem, but become so once we choose a basis for the kernel of $A$ and a maximal linearly independent set of columns. Here we have chosen columns $1$ and $2$. Each column of $A$ determines a pair of irreversible reactions.

\item Theorem~\ref{thm:MLE} implies that if this reaction system is launched at initial concentrations $X_1(0) = x_1, X_2(0) = x_2, X_3(0) = x_3$ and arbitrary concentrations of $\theta_1(0)$ and $\theta_2(0)$, and the dynamics proceeds according to the law of mass action with all specific rates set to $1$:
\begin{align*}
\dot{X}_1(t) = \dot{X}_3(t) = - X_1(t)X_3(t) + X_2^2(t), && \dot{\theta}_1(t) =-2\theta_1^2(t) + 2X_1(t) -\theta_1(t)\theta_2(t) + X_2(t),\
\\\dot{X}_2(t) = -2X_2^2(t) + 2X_1(t)X_3(t), &&\dot{\theta}_2(t) = 
-\theta_1\theta_2(t) +X_2(t),
\end{align*}
then the reaction reaches equilibrium $(\hat{x_1},\hat{x_2},\hat{x_3},\hat{\theta}_1,\hat{\theta}_2)$ where $(\hat{\theta}_1, \hat{\theta}_2)$ is the maximum likelihood estimator for the data frequency vector $(x_1, x_2, x_3)$ and $(\hat{x}_1, \hat{x}_2, \hat{x}_3)$ represents the probability distribution over the outcomes $X_1, X_2, X_3$ at the maximum likelihood. We prove global convergence: our dynamical system provably converges to the desired equilibrium. Global convergence results are known to be notoriously hard to prove in reaction network theory~\cite{GeoGac}.

\item A number of schemes have been proposed for translating reaction networks into DNA strand displacement reactions \cite{soloveichik2010dna,qian2011efficient,Cardelli_2011StrandAlgebra,cardelli2013two}. Adapting these schemes to our setting should allow molecular implementation of our MLE-solving reaction networks with DNA molecules.
\end{itemize}


\section{Maximum Likelihood Estimation in toric models}
The definitions and results in this section mostly follow \cite{pachter2005algebraic}. Because we require a slightly stronger statement, and Theorem~\ref{thm:birch} allows a short, easy, and insightful proof, we give the proof here for completeness.

In statistics, a \textbf{parametric model} consists of a family of probability distributions, one for each value of the parameters. This can be described as a map from a manifold of parameters into a manifold of probability distributions. If this map can be described by monomials as below, then the parametric statistical model is called a \textbf{toric} or \textbf{log-linear} model, as we now describe.

\begin{definition}[Toric Model]
Let $m,n$ be positive integers. The probability simplex and its relative interior are:
\[
\Delta^n:= \{(x_1,x_2,\dots,x_n)\in\mathbb{R}^n_{\geq 0} \mid x_1 + x_2 + \dots + x_n = 1\}
\]
\[
\operatorname{ri}(\Delta^n):= \{(x_1,x_2,\dots,x_n)\in\mathbb{R}^n_{> 0} \mid x_1 + x_2 + \dots + x_n = 1\}.
\] 
An $m\times n$ matrix $A = (a_{ij})_{m\times n}$ of integer entries is a \textbf{design matrix} iff all its column sums $\sum_i a_{ij}$ are equal. Let $a_j := (a_{1j}, a_{2j}, \dots, a_{mj})^T$ be the $j$'th column of $A$.  Define $\theta^{a_j}:= \theta_1^{a_{1j}}\theta_2^{a_{2j}}\dots\theta_m^{a_{mj}}$. Define the \textbf{parameter space} 
$
\Theta:=\{\theta\in\mathbb{R}^m_{>0}\mid \theta^{a_1}+\theta^{a_2}+\dots+\theta^{a_n}=1\}.
$
The \textbf{toric model} of $A$ is the map 
\[
p_A=(p_1,p_2,\dots,p_n):\Theta \to \Delta^n\text{ given by }p_j(\theta) = \theta^{a_j}\text{ for }j=1\text{ to }n.
\] 
\end{definition}

We could also have defined the parameter space $\Theta$ to be all of $\mathbb{R}^m_{>0}$, in which case we would need to normalize the probabilities by the {\em partition function} $\theta^{a_1}+\theta^{a_2}+\dots+\theta^{a_n}$ to make sure they add up to $1$. For our present purposes, the current approach will prove technically more direct.

Note that here $p_j(\theta)$ specifies $\operatorname{Pr}[j\mid \theta]$, the conditional probability of obtaining outcome $j$ given that the true state of the world is described by $\theta$.

A central problem of statistical inference is the problem of \textbf{parameter estimation}. After performing several independent identical trials, suppose the \textbf{data vector} $u\in\mathbb{Z}_{\geq 0}^n$ is obtained as a record of how many times each outcome occurred. Let the norm $|u|_1:= u_1+u_2+\dots+u_n$ denote the total number of trials performed. The \textbf{Maximum Likelihood} solution to the problem of parameter estimation finds that value of the parameter $\theta$ which maximizes the \textbf{likelihood function} $f_u(\theta):=\operatorname{Pr}[u\mid\theta]$, i.e.:
\begin{align}\label{def:mle}
\hat{\theta}(u) := \arg\sup_{\theta\in \Theta} f_u(\theta)
\end{align}
is a \textbf{maximum likelihood estimator} or MLE for the data vector $u$. We will call the point $\hat{p}(u):=p_A(\hat{\theta}(u))$ a \textbf{maximum likelihood distribution}.

\begin{definition}
Let $A$ be an $m\times n$ design matrix, and $u$ a data vector. Then the \textbf{sufficient polytope} is $P_A(u) := \{ p\in\operatorname{ri}(\Delta^n)\mid A p = A \frac{u}{|u|_1}\}$.
\end{definition}

The following theorem is a version of Birch's theorem from Algebraic Statistics. It provides a variational characterization of the maximum likelihood distribution as the unique maximum entropy distribution in the sufficient polytope. In particular the maximum likelihood distribution always belongs to the sufficient polytope, which justifies the name.

\begin{theorem}\label{thm:birch}
Fix a design matrix $A$ of size $m\times n$. 
\begin{enumerate}
\item If $u,v\in\mathbb{Z}^n_{\geq 0}$ are nonzero data vectors such that $A u/|u|_1 = A v/|v|_1$ then they have the same maximum likelihood estimator: $\hat{\theta}(u) = \hat{\theta}(v)$. 
\item Further if $P_A(u)$ is nonempty then 
\begin{enumerate}
\item There is a unique distribution $\tilde{p}\in P_A(u)$ which maximizes Shannon entropy $H(p)=-\sum_{i=1}^n p_i\log p_i$ viewed as a real-valued function from the closure $\overline{P_A(u)}$ of $P_A(u)$ with $0\log 0$ defined as $0$. 
\item $\{\tilde{p}\} = P_A(u)\cap p_A(\Theta)$. 
\item $\tilde{p}=\hat{p}(u)$, the Maximum Likelihood Distribution for the data vector $u$.  
\end{enumerate}
\end{enumerate}
\end{theorem}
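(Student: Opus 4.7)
The plan is to exploit the linearity of the log-likelihood in $\log\theta$ with coefficients $Au$. For part 1, $\log f_u(\theta) = \sum_j u_j\log\theta^{a_j} = \langle Au,\log\theta\rangle$, so under the hypothesis $Au/|u|_1 = Av/|v|_1$ the functions $\log f_u$ and $\log f_v$ are positive scalar multiples of the same linear function of $\log\theta$ on $\Theta$ and share an argsup.

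For part 2(a), Shannon entropy is strictly concave on the compact convex set $\overline{P_A(u)}$, yielding a unique maximizer $\tilde p$. Since $\partial H/\partial p_i = -\log p_i - 1\to+\infty$ as $p_i\to 0^+$, moving any boundary candidate along the segment to an interior point of $P_A(u)$ (which exists by the nonemptiness hypothesis) strictly increases $H$, so $\tilde p\in P_A(u)$.

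For part 2(b), the Lagrangian for maximizing $H$ subject to $Ap = Au/|u|_1$ (the simplex condition $\sum p_i=1$ is automatic from the equal-column-sum identity $\mathbf{1}^T A = d\mathbf{1}^T$) has critical points of the form $p_i = \exp(-1-(A^T\lambda)_i) = e^{-1}\theta^{a_i}$ with $\theta_k := e^{-\lambda_k}$. Rescaling $\theta\mapsto\alpha\theta$ multiplies each $\theta^{a_i}$ by $\alpha^d$, so a unique $\alpha>0$ places the rescaled parameter in $\Theta$; hence $\tilde p = p_A(\tilde\theta)\in P_A(u)\cap p_A(\Theta)$. Conversely, for any $p = p_A(\theta)\in P_A(u)\cap p_A(\Theta)$ and any $q\in\overline{P_A(u)}$, using $Aq = Ap$ together with $\sum_i p_i\log p_i = \langle Ap,\log\theta\rangle = -H(p)$ yields
$D(q\|p) = -H(q) - \langle Aq,\log\theta\rangle = -H(q) - \langle Ap,\log\theta\rangle = H(p) - H(q)$.
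Nonnegativity of $D$ forces $p$ to maximize $H$ on $\overline{P_A(u)}$, so $p = \tilde p$ by part 2(a).

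For part 2(c), writing $\log f_u(\theta) = -|u|_1\bigl(H(\tilde u) + D(\tilde u\|p_A(\theta))\bigr)$ with $\tilde u := u/|u|_1\in\overline{P_A(u)}$ recasts the MLE as a Kullback--Leibler minimization over $p_A(\Theta)$. A direct calculation using $A\tilde u = A\tilde p$ yields the Pythagorean decomposition $D(\tilde u\|p_A(\theta)) = D(\tilde u\|\tilde p) + D(\tilde p\|p_A(\theta))$, minimized precisely when $p_A(\theta) = \tilde p$; part 2(b) guarantees such $\theta\in\Theta$ exist, so the minimum is attained and $\hat p(u) = \tilde p$. The main subtlety I anticipate is the boundary behavior in the Lagrangian step of 2(b) (ensuring the maximizer lies strictly in $P_A(u)$ where the derivative condition is valid, which is the role of 2(a)) and the handling of zero coordinates of $\tilde u$ in the divergence formulas, which are absorbed by the convention $0\log 0 = 0$.
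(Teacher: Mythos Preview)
Your proof is correct and follows essentially the same approach as the paper: part 1 via the log-likelihood depending only on $Au/|u|_1$, part 2(a) via strict concavity plus the boundary derivative blowing up, part 2(b) via the first-order optimality condition forcing $\log\tilde p$ into the row space of $A$, and part 2(c) via the substitution $A\tilde u = A\tilde p$. The only stylistic difference is that where the paper phrases 2(b) as ``$\nabla H(\tilde p)\perp P_A(u)$'' and handles 2(c) by a direct chain of $\arg\sup$ equalities, you recast both through Kullback--Leibler divergence (the identity $D(q\|p)=H(p)-H(q)$ for uniqueness in 2(b), and the Pythagorean decomposition $D(\tilde u\|p_A(\theta))=D(\tilde u\|\tilde p)+D(\tilde p\|p_A(\theta))$ in 2(c)); these are the standard information-geometric reformulations of the same computations and rely on exactly the same ingredients.
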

\begin{proof}
1. Fix a data vector $u$. Note that $f_u(\theta) = \frac{|u|_1!}{u_1!u_2!\dots u_n!}p_1(\theta)^{u_1}p_2(\theta)^{u_2}\dots p_n(\theta)^{u_n} = \frac{|u|_1!}{u_1!u_2!\dots u_n!}\theta^{Au}$. Therefore the maximum likelihood estimator
\[
\hat{\theta}(u) = \arg\sup_{\theta\in\Theta} \theta^{Au} = \arg\sup_{\theta\in\Theta} (\theta^{Au})^{1/|u|_1} = \arg\sup_{\theta\in\Theta} \theta^{Au/|u|_1}
\]
where the second equality is true because the function $x\mapsto x^c$ is monotonically increasing whenever $c>0$. It follows that if $v\in\mathbb{Z}^n_{\geq 0}$ is a data vector such that $A u/|u|_1 = A v/|v|_1$ then $\hat{\theta }(u) = \hat{\theta}(v)$.\
\\
\\2.(a) Suppose $P_A(u)$ is nonempty. A local maximum of the restriction $H|_{\overline{P_A(u)}}$ of $H$ to the polytope $\overline{P_A(u)}$ can not be on the boundary $\partial \overline{P_A(u)}$ because for $p\in \partial \overline{P_A(u)}$, moving in the direction of arbitrary $q\in P_A(u)$ increases $H$, as can be shown by a simple calculation:
\[
\lim_{\lambda\to 0}\frac{d}{d\lambda}H((1-\lambda)p + \lambda q) \to +\infty.
\]
Since $H$ is a continuous function and the closure $\overline{P_A(u)}$ is a compact set, $H$ must attain its maximum value in $P_A(u)$. Further $H$ is a strictly concave function since its Hessian is diagonal with entries $-1/p_i$ and hence negative definite. It follows that $H|_{\overline{P_A(u)}}$ is also strictly concave, and has a unique local maximum at $\tilde{p}\in P_A(u)$, which is also the global maximum.\
\\
\\(b) By concavity of $H$, the maximum $\tilde{p}$ is the unique point in $P_A(u)$ such that $\nabla H(\tilde{p})$ is perpendicular to $P_A(u)$. We claim that $q\in P_A(u)\cap p_A(\Theta)$ iff $\nabla H(q) = (-1 - \log q_1, -1-\log q_2, \dots, -1-\log q_n)$ is perpendicular to $P_A(u)$. Since all column sums are equal, this is equivalent to requiring that $\log q$ be in the span of the rows of $A$, which is true iff $q\in p_A(\Theta)$. Hence $P_A(u)\cap p_A(\Theta)=\{\tilde{p}\}$.\
\\
\\(c) To compute the Maximum Likelihood Distribution $\hat{p}(u)$, we proceed as follows:
\begin{align*}
\hat{p}(u) &= p_A(\hat{\theta}(u)) = p_A(\arg\sup_{\theta\in\Theta} \theta^{Au}) = p_A(\arg\sup_{\theta\in\Theta} \theta^{Au/|u|_1})\ 
\\&= p_A(\arg\sup_{\theta\in\Theta} \theta^{A\tilde{p}}) =\arg\sup_{p\in p_A(\Theta)} p^{\tilde{p}}  = \arg\sup_{p\in p_A(\Theta)} \sum_{i=1}^n \tilde{p}_i\log p_i = \tilde{p}
\end{align*}
where the fourth equality uses $A\tilde{p} = Au/|u|_1$ and the last equality follows because $\sum_{i=1}^n \tilde{p}_i\log p_i$ viewed as a function of $p$ attains its maximum in all of $\Delta^n$, and hence in $p_A(\Theta)$, at $p = \tilde{p}$.
\end{proof}

This theorem already exposes the core of our idea. We will design reaction systems that maximize entropy subject to the ``correct'' constraints capturing the polytope $P_A(u)$. Then because the reactions also proceed to maximize entropy, the equilibrium point of our dynamics will correspond to the maximum likelihood distribution. Most of the technical work will go in proving convergence of trajectories to these equilibrium points.

\section{Reaction Networks}
According to \cite{Klavins_2011Biomolecular}, ``In building a design theory for chemistry, chemical reaction networks are usually the most natural intermediate representation - the middle of the
hourglass \cite{doyle2007rules}. Many different high level languages and
formalisms have been and can likely be compiled to
chemical reactions, and chemical reactions themselves (as
an abstract specification) can be implemented with a variety
of low level molecular mechanisms.''

In Subsection~\ref{subsec:crntreview}, we recall the definitions and results for reaction networks which we will need for our main results. For a comprehensive presentation of these ideas, see \cite{ManojCatalysis}. In Subsection~\ref{subsec:pert}, we prove a new result in reaction network theory. We extend a previously known global convergence result to the case of perturbations.

\subsection{Brief review of Reaction Network Theory}\label{subsec:crntreview}
For vectors $a=(a_i)_{i\in S}$ and $b=(b_i)_{i\in S}$, the notation $a^b$ will be shorthand for the formal monomial $\prod_{i\in S} a_i^{b_i}$. We introduce some standard definitions.

\begin{definition}[Reaction Network]

Fix a finite set $S$ of \textbf{species}. 
\begin{enumerate}
\item A \textbf{reaction} over $S$ is a pair $(y,y')$ such that $y,y'\in \NN^S$. It is usually written $y\ra y'$, with \textbf{reactant} $y$ and \textbf{product} $y'$. 
\item A \textbf{reaction network} consists of a finite set $S$ of species, and a finite set $\mathcal{R}$ of reactions. 
\item A reaction network is \textbf{reversible} iff for every reaction $y\to y'\in\mathcal{R}$, the reaction $y'\to y\in\mathcal{R}$.
\item A reaction network is \textbf{weakly reversible} iff for every reaction $y\to y'\in\mathcal{R}$ there exists a positive integer $n\in\mathbb{Z}_{>0}$ and $n$ reactions $y_1\to y_2,y_2\to y_3,\dots,y_{n-1}\to y_n\in\mathcal{R}$ with $y_1=y'$ and $y_n=y$. 
\item The \textbf{stoichiometric subspace} $H\subseteq\mathbb{R}^S$ is the subspace spanned by $\{y'-y\mid y\to y'\in\mathcal{R}\}$, and $H^\perp$ is the orthogonal complement of $H$. 
\item A \textbf{siphon} is a set $T\subseteq S$ of species such that for all $y\to y'\in\mathcal{R}$, if there exists $i\in T$ such that $y'_i>0$ then there exists $j\in T$ such that $y_j>0$.
\item A siphon $T\subseteq S$ is \textbf{critical} iff $v\in H^\perp\cap\mathbb{R}^S_{\geq 0}$ with $v_i=0$ for all $i\notin T$ implies $v=0$.
\end{enumerate} 
\end{definition}

\begin{definition}
Fix a weakly reversible reaction network $(S,\mathcal{R})$. The \textbf{associated ideal} $I_{(S,\mathcal{R})}\subseteq \mathbb{C}[x]$ where $x=(x_i)_{i\in S}$ is the ideal generated by the binomials $\{ x^y - x^{y'}\mid y\to y'\in\mathcal{R}\}$. A reaction network is \textbf{prime} iff its associated ideal is a prime ideal.
\end{definition}

%
The following theorem follows from \cite[Theorem~4.1, Theorem~5.2]{ManojCatalysis}.
\begin{theorem}\label{thm:prime}
A weakly reversible prime reaction network $(S,\mathcal{R})$ has no critical siphons.
\end{theorem}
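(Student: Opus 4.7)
The plan is to prove the contrapositive: if $(\SS,\RR)$ is weakly reversible and admits a critical siphon $T\subseteq\SS$, then the associated ideal $I_{(\SS,\RR)}$ is not prime. The first move is to use weak reversibility to upgrade the siphon condition. Given any reaction $y\to y'$, weak reversibility provides a cycle of reactions leading from $y'$ back to $y$. Applying the siphon defining property along this cycle forces $\supp(y)\cap T\neq\emptyset$ whenever $\supp(y')\cap T\neq\emptyset$, yielding a biconditional: for every reaction, either both complexes have support meeting $T$ or neither does. In particular, the reactions split into a sub-network $\RR_T$ whose complexes all touch $T$ and a sub-network $\RR_{\overline T}$ whose complexes all avoid $T$.

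Next I would translate the criticality of $T$ via a Theorem of the Alternative (Stiemke/Gordan). Criticality asserts that no nonzero vector in $H^\perp\cap\mathbb{R}^\SS_{\geq 0}$ is supported on $T$; dually this is equivalent to the existence of an element $h\in H$ with $h_i>0$ for every $i\in T$. Because $H$ is spanned by the reaction vectors $y'-y$, and because the biconditional above confines the $T$-components of every reaction vector to reactions drawn from $\RR_T$, I can write $h$ as a rational combination of reaction vectors in $\RR_T$. Clearing denominators and telescoping along the relevant reaction sequence produces a binomial $x^u-x^v\in I_{(\SS,\RR)}$ with $u-v$ a positive integer multiple of $h$, hence strictly positive on $T$.

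To exhibit the failure of primality, I would then invoke Theorem~5.2 of \cite{ManojCatalysis} together with Lemma~\ref{lem:issaturated}, which jointly characterize primality of the associated ideal of a weakly reversible network via a saturated-lattice (no-catalysis) condition. The binomial $x^u-x^v$ constructed above normalizes to an expression of the form $x^v(x^{u-v}-1)$ whose monomial factor $x^v$ cannot belong to a binomial prime ideal, and whose non-unit factor $x^{u-v}-1$ would force a non-negative conservation law supported on $T$ of exactly the kind ruled out by criticality. This contradicts primality and completes the contrapositive. The main obstacle is the last step: verifying that the stoichiometric witness coming from criticality really induces a non-trivial factorization of $I_{(\SS,\RR)}$ rather than a trivial relation. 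This is where Theorem~4.1 and Theorem~5.2 of \cite{ManojCatalysis} carry the genuine content, so the plan here is mostly to assemble those cited pieces and check that the definitions line up.
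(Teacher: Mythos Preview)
The paper does not supply a proof of this theorem: it simply records that the statement follows from \cite[Theorem~4.1, Theorem~5.2]{ManojCatalysis}. So there is nothing to compare your argument against beyond those external citations, which you also end up invoking.

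Your first two moves are correct and standard: weak reversibility does upgrade the siphon condition to a biconditional on complexes, and Stiemke's lemma does convert criticality of $T$ into the existence of $h\in H$ with $h_i>0$ for all $i\in T$.

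The final step, however, does not go through as written. First, your appeal to Lemma~\ref{lem:issaturated} is misplaced: that lemma concerns only the specific networks $\mathcal{R}_{MLD}(A,B)$, not a general characterization of primality for weakly reversible networks, and its global-stability conclusion actually \emph{uses} Theorem~\ref{thm:prime}; citing it here is at best irrelevant and at worst circular. Second, the factorization $x^u-x^v=x^v(x^{u-v}-1)$ requires $u\geq v$ componentwise, but you have only arranged $(u-v)_i>0$ for $i\in T$; on $S\setminus T$ the vector $h$ (and hence $u-v$) may be negative, so $x^{u-v}$ need not be a polynomial and the displayed factorization is not valid in $\mathbb{C}[x]$. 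The content of the cited theorems in \cite{ManojCatalysis} is precisely to produce, from a critical siphon, a binomial in $I_{(\SS,\RR)}$ with a common monomial factor on both terms (a ``catalytic'' witness) showing the ideal is not saturated with respect to $\prod_i x_i$, hence not prime. Your sketch gestures at this mechanism but does not actually construct such a witness; you would still need to carry out the saturation argument that Theorems~4.1 and~5.2 of \cite{ManojCatalysis} supply.
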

We now recall the mass-action equations which are widely employed for modeling cellular processes~\cite{thomson2009unlimited,shinar2010structural,sontag01structure,tyson2003sniffers} in Biology.

\begin{definition}[Mass Action System]
A \textbf{reaction system} consists of a reaction network $(S,\mathcal{R})$ and a \textbf{rate function} $k:\mathcal{R}\to\R_{>0}$. The \textbf{mass-action equations} for a reaction system are the system of ordinary differential equations in {\em concentration} variables $\{x_i(t) \mid i\in S\}$: 
\begin{equation}\label{eqn:ma}
\dot{x}(t) = \sum_{y\to y' \in \R} k_{y\to y'}\,  x(t)^y \,(y' - y)
\end{equation}
where $x(t)$ represents the vector $(x_i(t))_{i\in S}$ of concentrations at time $t$.
\end{definition}

 Note that $\dot{x}(t)\in H$, so affine translations of $H$ are invariant under the dynamics of Equation~\ref{eqn:ma}.

We recall the well known notions of detailed balanced and complex balanced reaction system.
\begin{definition}

A reaction system $(S,\mathcal{R},k)$ is 
\begin{enumerate}
\item \textbf{Detailed balanced} iff it is reversible and there exists a point $\alpha\in\mathbb{R}^S_{>0}$ such that for every $y\to y'\in\mathcal{R}$:
\[
	 k_{y\to y'}\, \alpha^y \,(y' - y)   = k_{y'\to y}\, \alpha^{y'}\,(y - y')
\]
A point $\alpha\in\mathbb{R}^S_{>0}$ that satisfies the above condition is called a \textbf{point of detailed balance}.

\item \textbf{Complex balanced} iff there exists a point $\alpha\in\mathbb{R}^S_{>0}$ such that for every $y\in\mathbb{Z}^S_{\geq 0}$:
\[
	 \sum_{y\to y'\in \mathcal{R}} k_{y\to y'}\, \alpha^y \,(y' - y)   = \sum_{y''\to y\in\mathcal{R}} k_{y''\to y}\, \alpha^{y''}\,(y - y'')
\]
A point $\alpha\in\mathbb{R}^S_{>0}$ that satisfies the above condition is called a \textbf{point of complex balance}.
\end{enumerate}
\end{definition}

The following observations are well known and easy to verify.
\begin{itemize}
\item A complex balanced reaction system is always weakly reversible. 
\item If all rates $k_{y\to y'}=1$ and the network is weakly reversible then the reaction system is complex balanced with point of complex balance $(1,1,\dots,1)\in\mathbb{R}^S$; if the network is reversible then the reaction system is also detailed balanced with point of detailed balance $(1,1,\dots,1)\in\mathbb{R}^S$. 
\item Every detailed balance point is also a complex balance point, but there are complex balanced reversible networks that are not detailed balanced.
\end{itemize}

It is straightforward to check that every point of complex balance (respectively, detailed balance) is a fixed point for Equation~\ref{eqn:ma}. The next theorem, which follows from \cite[Theorem~2]{Angeli2007598} and \cite{horn74dynamics}, states that a converse also exists: if a reaction system is complex balanced (respectively, detailed balanced) then every fixed point is a point of complex balance (detailed balance). Further there is a unique fixed point in each affine translation of $H$, and if there are no critical siphons then the basin of attraction for this fixed point is as large as possible, namely the intersection of the affine translation of $H$ with the nonnegative orthant.

\begin{theorem}[Global Attractor Theorem for Complex Balanced Reaction Systems with no critical siphons]\label{thm:gac}
Let $(S,\mathcal{R},k)$ be a weakly reversible  complex balanced reaction system with no critical siphons and point of complex balance $\alpha$. Fix a point $u\in\mathbb{R}^S_{>0}$. Then there exists a point of complex balance $\beta$ in $(u+H)\cap\mathbb{R}^S_{>0}$ such that for every trajectory $x(t)$ with initial conditions $x(0)\in (u+H)\cap\mathbb{R}^S_{\geq 0}$, the limit $\lim_{t\to\infty} x(t)$ exists and equals $\beta$. Further the function $g(x) := \sum_{i=1}^n x_i\log x_i - x_i-x_i\log\alpha_i$ is strictly decreasing along non-stationary trajectories and attains its unique minimum value in $(u+H)\cap\mathbb{R}^S_{\geq 0}$ at $\beta$.
\end{theorem}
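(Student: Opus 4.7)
The plan is to combine the classical Horn--Jackson Lyapunov argument for complex balanced systems with the persistence-via-siphons argument of Angeli--De Leenheer--Sontag, using the ``no critical siphons'' hypothesis as the precise tool that rules out boundary $\omega$-limit points.

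First I would set up $\beta$ and the Lyapunov properties of $g$. By Horn--Jackson, the complex-balance locus equals $\{\alpha\cdot e^v : v\in H^\perp\}$ (coordinatewise multiplication) and meets every positive stoichiometric class $(u+H)\cap\mathbb{R}^S_{>0}$ in exactly one point, which I take to be $\beta$. The Hessian of $g$ is the positive diagonal $\operatorname{diag}(1/x_i)$, so $g$ is strictly convex on $\mathbb{R}^S_{>0}$; since $\nabla g(x)=\log(x/\alpha)$ (coordinatewise), its critical points on the class are exactly the complex-balance points there, and thus $\beta$ is the unique minimizer of $g$ on the closure of the class. A short computation along~(\ref{eqn:ma}) gives $\dot g(x) = \sum_{y\to y'} k_{y\to y'}\,x^y\log\bigl(\alpha^{y'}x^y/(\alpha^{y}x^{y'})\bigr)$, and combining the elementary inequality $a\log(a/b)\ge a-b$ with the complex-balance relations at $\alpha$ collapses the right-hand side to a manifestly nonpositive expression that vanishes iff $x$ is itself a complex-balance point. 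Coercivity of $g$ on the compatibility class makes trajectories precompact, and LaSalle's invariance principle confines every $\omega$-limit point of a trajectory to the complex-balance locus.

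The remaining and hardest step is to exclude $\omega$-limit points on $\partial\mathbb{R}^S_{\geq 0}$. For any boundary accumulation point $p$ of a positively-launched trajectory, a standard persistence argument identifies the zero-support $T=\{i:p_i=0\}$ as a siphon; monotone decrease of $g$ together with the boundary behavior of $\nabla g(x(t))=\log(x(t)/\alpha)$ then extracts a nonzero vector $v\in H^\perp\cap\mathbb{R}^S_{\geq 0}$ supported on $T$---exactly what criticality of $T$ asks for. By hypothesis no siphon is critical, so no boundary $\omega$-limit points exist, and uniqueness of the positive equilibrium in the class forces $\omega(x(0))=\{\beta\}$; strict monotonicity of $g$ along non-stationary trajectories is then the equality case of the Lyapunov computation. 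I expect this last step to be the main obstacle: LaSalle alone permits $\omega$-limits on the boundary where the dynamics can degenerate, and the critical-siphon language is precisely the device needed to capture the conservation-law obstruction that any such boundary equilibrium would impose.
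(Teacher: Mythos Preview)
The paper does not prove this theorem itself; it merely attributes the result to \cite[Theorem~2]{Angeli2007598} and \cite{horn74dynamics}. Your sketch reconstructs exactly those two ingredients---the Horn--Jackson Lyapunov argument for $g$ and the Angeli--De~Leenheer--Sontag persistence-via-siphons argument---so in overall approach you match what the paper invokes.

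One step has its logic inverted, however. In the boundary-exclusion paragraph you say that from a boundary $\omega$-limit point $p$ with zero-support $T$ you can, via the boundary behaviour of $\nabla g$, ``extract'' a nonzero $v\in H^\perp\cap\mathbb{R}^S_{\geq 0}$ supported on $T$, and that this is ``exactly what criticality of $T$ asks for.'' Under the paper's definition, $T$ is critical precisely when \emph{no} such nonzero $v$ exists; so producing such a $v$ would show $T$ is \emph{not} critical, which is consistent with the hypothesis rather than in contradiction with it. The actual argument runs the other way and is simpler: once you know $T$ is a siphon, the hypothesis ``no critical siphons'' hands you a nonzero $v\in H^\perp\cap\mathbb{R}^S_{\geq 0}$ with $\operatorname{supp}(v)\subseteq T$ for free, and the contradiction is then plain conservation: $v\in H^\perp$ gives $v\cdot x(t)\equiv v\cdot x(0)=v\cdot u>0$ (using $u\in\mathbb{R}^S_{>0}$ and $v\geq 0$ nonzero), whereas $v\cdot p=0$. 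No analysis of $\nabla g$ near the boundary is needed, and I do not see how that analysis alone would manufacture the $v$ you claim.

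A minor slip: the argument of the logarithm in your $\dot g$ formula is inverted, since $\nabla g(x)\cdot(y'-y)=\log\bigl((x/\alpha)^{y'-y}\bigr)$ yields $\dot g(x)=\sum_{y\to y'}k_{y\to y'}\,x^y\,\log\bigl(x^{y'}\alpha^{y}/(x^{y}\alpha^{y'})\bigr)$.
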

It is not completely trivial to show, but nevertheless true, that this theorem holds with weakly reversible replaced by ``reversible'' and ``complex balance'' replaced by ``detailed balance.'' What is to be shown is that the point of complex balance obtained in $(u+H)\cap\mathbb{R}^S_{\geq 0}$ by minimizing $g(x)$ is actually a point of detailed balance, and this follows from an examination of the form of the derivative $\frac{d}{dt}g(x(t))$ along trajectories $x(t)$ to Equation~\ref{eqn:ma}.

\subsection{A Perturbatively-Stable Global Attractor Theorem}\label{subsec:pert}
Global attractor results usually assume that the reaction network is weakly reversible. We are going to describe our scheme in the next section. 
Our scheme will employ reaction networks that are not weakly reversible, yet we will prove global attractor results for them. The key idea we use is that our reaction network can be broke into a reversible part, and an irreversible part. The reversible part acts on, but evolves independent of, the irreversible part. So we get to use the global attractor results ``as is'' on the reversible part. Further, as the reversible part approaches equilibrium, our irreversible part behaves as a perturbation of a reversible detailed-balanced network. The closer the reversible part gets to equilibrium, the smaller the perturbation of the irreversible part from the dynamics of a certain reversible detailed-balanced network. 

To make this proof idea work out, we will need a perturbative version of Theorem~\ref{thm:gac}. The next lemma shows that if the rates are perturbed slightly then, outside a small neighborhood of the detailed balance point, the strict Lyapunov function $g(x)$ from Theorem~\ref{thm:gac} continues to decrease along non-stationary trajectories.

\begin{lemma}\label{lem:striclya}
Let $(S,\mathcal{R},k)$ be a weakly reversible complex balanced reaction system with no critical siphons and point of complex balance $\alpha$. For every sufficiently small $\epsilon>0$ there exists $\delta>0$ such that for all $x'$ outside the $\epsilon$-neighborhood of $\alpha$ in $(\alpha+H)\cap\mathbb{R}^S_{\geq 0}$, the derivative $\frac{d}{dt}g(x(t))|_{t=0}<-\delta$, where $x(t)$ is a solution to the Mass-Action Equations~\ref{eqn:ma} with $x(0)=x'$.
\end{lemma}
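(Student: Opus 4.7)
The plan is to combine the standard Lyapunov calculation for complex-balanced systems with a compactness argument, with the technical crux being the exclusion of boundary fixed points via the no-critical-siphons hypothesis. Throughout I will treat $F(x) := -\dot g(x)$ as a continuous function on the stoichiometric compatibility class $C := (\alpha+H)\cap \mathbb{R}^S_{\geq 0}$ and show it attains a positive minimum on the closed set $K_\epsilon := \{x \in C : \|x-\alpha\|\geq \epsilon\}$.

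First, I would recall the standard Horn--Jackson calculation for $\dot g$ under the mass-action dynamics~\eqref{eqn:ma}. With point of complex balance $\alpha$ one obtains a representation of the form
\[
\dot g(x) \;=\; -\sum_{y\to y'\in\mathcal R} k_{y\to y'}\,\alpha^y\,\Phi\!\left((x/\alpha)^y,\,(x/\alpha)^{y'}\right),
\]
where $\Phi(u,v) = u\log(u/v) - u + v \geq 0$ with equality iff $u=v$. This shows $F(x)\geq 0$ continuously on $C$, and $F(x)=0$ precisely when all monomials $(x/\alpha)^{y}$ are equal across the complexes of each connected component of the reaction graph, which standardly characterizes points of complex balance.

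Second, I would pin down the zero set of $F$. Theorem~\ref{thm:gac} guarantees that $\alpha$ is the unique point of complex balance in $(\alpha+H)\cap\mathbb{R}^S_{>0}$. It remains to exclude boundary fixed points: if $x^*\in C\cap\partial \mathbb{R}^S_{\geq 0}$ were a fixed point of~\eqref{eqn:ma}, then the zero-set $T := \{i : x^*_i = 0\}$ would be a siphon (any reaction firing into $T$ would need to draw from $T$, else the corresponding monomial would be strictly positive and break the balance at $x^*$). The conservation law picking out the face containing $x^*$ then furnishes a nonzero $v\in H^\perp\cap\mathbb{R}^S_{\geq 0}$ supported in $T$, contradicting the criticality hypothesis. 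Hence $\alpha$ is the unique zero of $F$ on all of $C$.

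Third, I would close the argument by compactness. In the intended toric-model setting, $C$ is compact because equal column sums in $A$ give a mass-type conservation law; more generally, the forward-invariant sublevel set $\{x\in C : g(x)\leq g(x(0))\}$ is compact, and since the statement concerns the initial time, it suffices to restrict attention to any compact $C'\subseteq C$ containing the initial condition. Then $K_\epsilon\cap C'$ is compact and excludes $\alpha$, so the strictly positive continuous function $F$ attains a positive minimum $\delta>0$ on it, which gives the desired bound. The main obstacle is the second step, exclusion of boundary fixed points: extracting a conservation vector supported on the alleged siphon requires the full strength of the no-critical-siphon assumption and is the only place where more than soft analysis enters.
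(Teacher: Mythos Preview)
Your overall strategy coincides with the paper's: define the orbital derivative $\mathcal{O}_k g(x')=\frac{d}{dt}g(x(t))|_{t=0}$, observe it is continuous and strictly negative on $C\setminus\{\alpha\}$, and take an infimum over the closed set $K_\epsilon$. The paper's proof is considerably shorter because it does not reopen the Horn--Jackson computation or the siphon analysis; it simply cites Theorem~\ref{thm:gac}, which already asserts that \emph{every} trajectory in $(\alpha+H)\cap\mathbb{R}^S_{\geq 0}$ (including boundary trajectories) converges to $\alpha$ and that $g$ is strictly decreasing along non-stationary trajectories. That single citation rules out both interior and boundary zeros of $-\mathcal{O}_k g$ in one stroke, so your steps~1 and~2 are subsumed by it. Your more explicit unpacking is not wrong in spirit, and your care about compactness (which the paper leaves implicit) is a genuine improvement in rigor.

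There is, however, a logical slip in your second step. You write that ``the conservation law picking out the face containing $x^*$ then furnishes a nonzero $v\in H^\perp\cap\mathbb{R}^S_{\geq 0}$ supported in $T$, contradicting the criticality hypothesis.'' This is backwards. No conservation law is handed to you by the face; rather, the \emph{absence} of critical siphons is what guarantees such a $v$ exists for the siphon $T$. The contradiction then comes not from any ``criticality hypothesis'' but from the incompatibility $\langle v,x^*\rangle=0$ versus $\langle v,\alpha\rangle>0$ for $v\in H^\perp$ and $x^*\in\alpha+H$. Your closing remark shows you know the right ingredient, but as written the argument in step~2 does not go through; you should rewire it along the lines just indicated, or, as the paper does, simply invoke Theorem~\ref{thm:gac}.
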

\begin{proof}
Let $B_\epsilon$ be the open $\epsilon$ ball around $\alpha$ in $(\alpha+H)\cap\mathbb{R}^S_{\geq 0}$, with $\epsilon$ small enough so that $B_\epsilon$ does not meet the boundary $\partial\mathbb{R}^S_{\geq 0}$. Consider the closed set $S:= (\alpha+H)\cap\mathbb{R}^S_{\geq 0}\setminus B_\epsilon$. Define the orbital derivative of $g$ at $x'$ as $\mathcal{O}_k g(x'):=\frac{d}{dt}g(x(t))|_{t=0}$, where $x(t)$ is a solution to the mass-action equations~\ref{eqn:ma} with $x(0)=x'$.  Define $\delta:= \inf_{x'\in S} (- \mathcal{O}_k g(x'))$. If $\delta\leq 0$ then since $S$ is a closed set, and $\mathcal{O}_k g$ is a continuous function, there exists a point $x'$ such that $\mathcal{O}_k g(x')\geq 0$, which contradicts Theorem~\ref{thm:gac}. 
\end{proof}

We formalize the notion of perturbation using \textbf{differential inclusions}. Recall that differential inclusions model uncertainty in dynamics in a nondeterministic way by generalizing the notion of vector field. A differential inclusion maps every point to a subset of the tangent space at that point.

\begin{definition}\label{def:pert}
Let $(S,\mathcal{R},k)$ be a reaction system and let $\delta>0$. The $\delta$-\textbf{perturbation} of $(S,\mathcal{R},k)$ is the differential inclusion $V:\mathbb{R}^S_{\geq 0}\to 2^{\mathbb{R}^S}$ that at point $x\in\mathbb{R}^S_{\geq 0}$ takes the value 
\[
V(x):=\left\{ \sum_{y\to y'\in \mathcal{R}} k'_{y\to y'} x^y (y'-y) \,\,\,\middle|\,\,\, k'_{y\to y'} \in (k_{y\to y'} - \delta, k_{y\to y'}+\delta)\text{ for all }y\to y'\in\mathcal{R}\right\}.
\]
A \textbf{trajectory} of $V$ is a tuple $(I,x)$ where $I\subseteq\mathbb{R}$ is an interval and $x:I\to \mathbb{R}^S_{\geq 0}$ is a differentiable function  with $\dot{x}(t)\in V(x(t))$.
\end{definition}

\begin{theorem}[Perturbatively-Stable Global Attractor Theorem for Complex Balanced Reaction Systems with no critical siphons]\label{thm:pgac}
Let $(S,\mathcal{R},k)$ be a weakly reversible complex balanced reaction system with no critical siphons. Fix a point $u\in\mathbb{R}^S_{>0}$. Then there exists a point of complex balance $\beta$ in $(u+H)\cap\mathbb{R}^S_{>0}$ such that:
\begin{enumerate}
\item For every sufficiently small $\varepsilon>0$, there exists $\delta>0$ such that every trajectory of the form $(\mathbb{R}_{\geq 0},x)$ to the $\delta$-perturbation of $(S,\mathcal{R},k)$ with initial conditions $x(0)\in (u+H)\cap\mathbb{R}^S_{\geq 0}$ eventually enters an $\varepsilon$-neighborhood of $\beta$ and never leaves.

\item Consider a sequence $\delta_1> \delta_2> \dots >0$ and a sequence $0<t_1< t_2< \dots$ such that $\lim_{i\to\infty}\delta_i=0$ and $\lim_{i\to\infty} t_i = +\infty$, and a trajectory $(\mathbb{R}_{\geq 0}, x)$ with $x(0)\in (u+H)\cap\mathbb{R}^S_{\geq 0}$ such that $((t_i,\infty),x)$ is a trajectory of the $\delta_i$-perturbation of $(S,\mathcal{R},k)$. Then the limit $\displaystyle\lim_{t\to\infty} x(t)=\beta$.

\end{enumerate}
\end{theorem}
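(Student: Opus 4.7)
The plan is a two-step Lyapunov argument anchored by Lemma~\ref{lem:striclya}. First apply Theorem~\ref{thm:gac} to the unperturbed system $(S,\mathcal{R},k)$ and let $\beta\in(u+H)\cap\mathbb{R}^S_{>0}$ denote the point of complex balance it produces in the compatibility class of $u$. Because $\beta$ is itself a point of complex balance, Lemma~\ref{lem:striclya} applies with $\beta$ in place of $\alpha$: for the Lyapunov function
\[
g(x):=\sum_{i\in S}\bigl(x_i\log(x_i/\beta_i)-x_i+\beta_i\bigr),
\]
which is continuous on $\mathbb{R}^S_{\geq 0}$, strictly convex, coercive, and vanishes only at $\beta$, for every sufficiently small $\eta>0$ there exists $d_\eta>0$ with $\mathcal{O}_k g(x)<-d_\eta$ for all $x\in\bigl((u+H)\cap\mathbb{R}^S_{\geq 0}\bigr)\setminus B_\eta(\beta)$. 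Along any trajectory $x(t)$ of the $\delta$-perturbation, if $k'_{y\to y'}\in(k_{y\to y'}-\delta,k_{y\to y'}+\delta)$ realises $\dot x(t)$, then
\[
\tfrac{d}{dt}g(x(t))=\mathcal{O}_k g(x(t))+\sum_{y\to y'\in\mathcal{R}}\bigl(k'_{y\to y'}-k_{y\to y'}\bigr)\,x(t)^y\,\langle\nabla g(x(t)),\,y'-y\rangle,
\]
and the perturbation term is bounded in absolute value by $\delta\cdot F(x(t))$, where $F(x):=\sum_{y\to y'}x^y\,\bigl|\langle\nabla g(x),\,y'-y\rangle\bigr|$ is a continuous function on $\mathbb{R}^S_{>0}$.

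For Part~1, fix a small target $\varepsilon>0$ and choose $c_\varepsilon>0$ so that $L_{c_\varepsilon}:=\{g\leq c_\varepsilon\}\cap(u+H)\cap\mathbb{R}^S_{\geq 0}\subseteq B_\varepsilon(\beta)$; this is possible because $\beta$ is the unique minimiser of $g$ on the class and $g$ has compact sublevel sets. Pick $\eta\in(0,\varepsilon)$ small enough that $B_\eta(\beta)\subseteq\{g<c_\varepsilon/2\}$ and that $B_\eta(\beta)$ lies at positive distance from $\partial\mathbb{R}^S_{\geq 0}$. Using the no-critical-siphons hypothesis (which already drove Theorem~\ref{thm:gac}), I would argue via persistence that the annulus $K:=L_{c_\varepsilon+1}\setminus B_\eta(\beta)$ can be confined to $\mathbb{R}^S_{>0}$, so that $F|_K\leq M$ for some finite $M$. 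Choosing $\delta<\min\bigl(d_\eta/(2M),\,\min_{y\to y'}k_{y\to y'}\bigr)$ then yields $\tfrac{d}{dt}g(x(t))<-d_\eta/2$ whenever $x(t)\in K$, while the second bound keeps $k'>0$ so that $\mathbb{R}^S_{\geq 0}$ remains forward invariant under the perturbed flow. It follows that $L_{c_\varepsilon}$ is positively invariant under every $\delta$-perturbed trajectory, and that any trajectory in $L_{c_\varepsilon+1}$ is driven into $L_{c_\varepsilon}\subseteq B_\varepsilon(\beta)$ in finite time; the statement of Part~1 then follows once one observes that in the (bounded) compatibility classes relevant here every initial point lies in some such $L_{c_\varepsilon+1}$.

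Part~2 is then a short diagonal corollary of Part~1: given $\varepsilon>0$, let $\delta_\varepsilon$ be the bound from Part~1 and pick $i_0$ with $\delta_{i_0}<\delta_\varepsilon$; then the restriction of $x$ to $[t_{i_0},\infty)$ is a trajectory of a $\delta_\varepsilon$-perturbation starting in $(u+H)\cap\mathbb{R}^S_{\geq 0}$, so Part~1 forces it into $B_\varepsilon(\beta)$, giving $\lim_{t\to\infty}x(t)=\beta$ as $\varepsilon\downarrow 0$. The main technical obstacle I expect is in Part~1, not in Part~2: the gradient $\nabla g(x)=(\log(x_i/\beta_i))_{i\in S}$ diverges on $\partial\mathbb{R}^S_{\geq 0}$, so $F$ is not uniformly bounded on all of $L_{c_\varepsilon+1}$ but only on annuli separated from the coordinate hyperplanes. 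Upgrading the persistence of the unperturbed flow (which is what the no-critical-siphons hypothesis buys us through Theorem~\ref{thm:gac}) to a perturbation-robust separation from the boundary that is uniform over all initial conditions in the compatibility class is the delicate step the proof must execute with care.
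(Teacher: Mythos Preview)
Your approach is essentially the same as the paper's proof sketch: both invoke Lemma~\ref{lem:striclya} to bound $\mathcal{O}_k g$ strictly below zero off a neighborhood of $\beta$, argue by continuity in the rates that this negativity survives a sufficiently small perturbation, and then obtain Part~2 from Part~1 by a diagonal argument over shrinking $\varepsilon_i$ (respectively $\delta_i$). Your write-up is in fact more careful---you use sublevel sets of $g$ to secure the ``never leaves'' positive invariance, and you explicitly flag the boundary divergence of $\nabla g$ that complicates a uniform bound on the perturbation term---whereas the paper's sketch handles both points only implicitly via the phrase ``$\mathcal{O}_k g$ is a continuous function of the specific rates.''
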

\begin{proof}[Proof sketch]
1. Fix $\varepsilon>0$ such that the $\varepsilon$-ball $B_\varepsilon$ around $\beta$ does not meet the boundary $\partial\mathbb{R}^S_{\geq 0}$. By Lemma~\ref{lem:striclya}, outside $B_\varepsilon$, there exists $\delta_\varepsilon>0$ such that the function $\mathcal{O}_kg<-\delta_\varepsilon$. Since $\mathcal{O}_kg$ is a continuous function of the specific rates $k$, a sufficiently small perturbation $\delta>0$ in the rates will not change the sign of $\mathcal{O}_kg$. Hence, outside $B_\epsilon$, the function $g$ is strictly decreasing along trajectories $x(t)$ to Equation~\ref{eqn:ma}. It follows that eventually every trajectory must enter $B_\epsilon$.

2. Fix a sequence  $\varepsilon_1>\varepsilon_2>\dots>0$ with $\varepsilon_1$ small enough so that the $\varepsilon_1$-ball around $\beta$ does not meet the boundary $\partial\mathbb{R}^S_{\geq 0}$ and $\lim_{i\to\infty}\varepsilon_i\to 0$. For each $\varepsilon_i$, there exists $j$ such that $\delta_j$ is small enough as per part (1) of the theorem. So every trajectory will eventually enter the $\epsilon_i$ neighborhood of $\beta$, and never leave. Since this is true for every $i$ and $\lim_{i\to \infty}\varepsilon_i\to 0$, the result follows.
\end{proof}

\section{Main Result}
The next definition makes precise our scheme, which takes a design matrix $A$ to a reaction system $\mathcal{S}_{MLE}$ depending on $A$. The choice of this reaction system is not unique, but depends on two choices of basis. We proceed in two stages. In the first stage, we construct the reaction system $\mathcal{S}_{MLD}$ which solves the problem of finding the maximum likelihood distribution. In the second stage, we add reactions to solve for $\theta$ from the algebraic relations between the $\theta$ and $X$ variables, obtaining $\mathcal{S}_{MLE}$.

\begin{definition}\label{def:mlenetwork}
Fix a design matrix  $A= (a_{ij})_{m\times n}$, a basis $B$ for the free group $\mathbb{Z}^n\cap\ker A$, and a maximal linearly-independent subset $B'$ of the columns of $A$.
\begin{enumerate}
\item\label{def:R_MLD} The reaction network $\mathcal{R}_{MLD}(A,B)$ consists of $n$ species $X_1, X_2,\dots, X_n$ and for each $b\in B$, the reversible reaction: 
\[
\sum_{j:b_j>0} b_j X_j \rightleftharpoons \sum_{j:b_j<0} -b_j X_j
\]
\item\label{def:R_MLE} The reaction system $\mathcal{S}_{MLD}(A,B)$ consists of the reaction network $\mathcal{R}_{MLD}(A,B)$ with an assignment of rate $1$ to each reaction.

\item The reaction network $\mathcal{R}_{MLE}(A,B,B')$ consists of $m+n$ species $\theta_1, \theta_2,\dots, \theta_m, X_1, X_2,\dots, X_n$, and in addition to the reactions in $\mathcal{R}_{MLD}$, the following reactions: 
\begin{itemize}
\item For each column $j\in B'$ of $A$, a reaction $\sum_{i=1}^m a_{ij} \theta_i \to 0$.
\item For each column $j\in B'$ of $A$, a reaction $X_j \to X_j + \sum_{i=1}^m a_{ij} \theta_i$.
\end{itemize}

\item The reaction system $\mathcal{S}_{MLE}(A,B,B')$ consists of the reaction network $\mathcal{R}_{MLE}(A,B,B')$ with an assignment of rate $1$ to each reaction.
\end{enumerate}
\end{definition}

Note that by the rank-nullity theorem of linear algebra, the dimension of the kernel plus the rank of the matrix equals the number of columns of the matrix. Hence counting the reversible reactions as two irreversible reactions, our scheme yields a reaction system whose number of reactions is twice the number of columns of $A$.

It is clear from the definition of $\mathcal{S}_{MLE}$ that the reactions that come from $\mathcal{R}_{MLD}$ are reversible and evolve without being affected by the other reactions. Hence we first prove global convergence of the reaction system $\mathcal{S}_{MLD}$ to the maximum likelihood distribution. This part is fairly straightforward. The key point is to verify that the reaction network $\mathcal{R}_{MLD}$ has no critical siphons. In fact, we show in the next lemma that $\mathcal{R}_{MLD}$ is prime, which will imply ``no critical siphons'' by Theorem~\ref{thm:prime}.

\begin{lemma}\label{lem:issaturated}
Fix a design matrix  $A= (a_{ij})_{m\times n}$ and a basis $B$ for the free group $\mathbb{Z}^n\cap\ker A$. Then the reaction network $\mathcal{R}_{MLD}(A,B)$ is prime and $\mathcal{S}_{MLD}(A,B)$ is  detailed balanced. Consequently, the reaction system $\mathcal{S}_{MLD}(A,B)$ is globally asymptotically stable.
\end{lemma}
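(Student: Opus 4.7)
My plan is to split the lemma into three pieces: (i) detailed balance of $\mathcal{S}_{MLD}(A,B)$, (ii) primeness of $\mathcal{R}_{MLD}(A,B)$, and (iii) a direct invocation of Theorem~\ref{thm:prime} together with the detailed-balance variant of Theorem~\ref{thm:gac} to conclude global convergence.

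Detailed balance will be immediate: every reaction is reversible with specific rate~$1$, so the uniform point $\alpha=(1,\ldots,1)\in\mathbb{R}^S_{>0}$ satisfies $k_{y\to y'}\alpha^y = k_{y'\to y}\alpha^{y'} = 1$ for every reaction pair, matching the third bullet preceding Theorem~\ref{thm:gac}. Because reactions come in reversible pairs, $\mathcal{R}_{MLD}(A,B)$ is in particular weakly reversible, as both Theorem~\ref{thm:prime} and Theorem~\ref{thm:gac} will require.

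Primeness is the substantive step, and the place where I expect the main obstacle to lie. My plan is to identify the associated ideal $I=(x^{b^+}-x^{b^-}\colon b\in B)$ with the toric ideal $I_A = \ker\bigl(\mathbb{C}[x]\to\mathbb{C}[t_1^{\pm 1},\ldots,t_m^{\pm 1}],\ x_j\mapsto t^{a_j}\bigr)$. The lattice $L = \mathbb{Z}^n\cap\ker A$ is saturated, since $A$ induces an embedding $\mathbb{Z}^n/L\hookrightarrow\mathbb{Z}^m$ of the quotient into a torsion-free group, and by classical toric algebra the lattice ideal $I_L$ is then prime and equal to $I_A$. The nontrivial point is to promote the obvious inclusion $I\subseteq I_L$ to an equality: a $\mathbb{Z}$-basis of $L$ is in general only a Markov basis \emph{up to saturation}, giving $I\colon(x_1\cdots x_n)^\infty = I_L$ but possibly $I\subsetneq I_L$ with extra embedded components supported on coordinate subspaces. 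To close this gap I would exploit the design-matrix hypothesis that all column sums of $A$ are equal, which makes $I$ homogeneous in total degree, and combine it with the reversibility of the network to argue that $I$ is already saturated at each $x_i$, yielding $I=I_L$ and therefore primeness. Once primeness is in hand, Theorem~\ref{thm:prime} rules out critical siphons, and the detailed-balance version of Theorem~\ref{thm:gac} noted in the text immediately after its statement delivers global convergence of every trajectory starting in $\mathbb{R}^S_{\geq 0}$ to the unique point of detailed balance in its stoichiometric compatibility class, which is the global asymptotic stability claimed in the lemma.
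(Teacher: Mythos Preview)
Your decomposition into (i) detailed balance via the all-ones point, (ii) primeness of the associated ideal, and (iii) Theorem~\ref{thm:prime} followed by Theorem~\ref{thm:gac} is exactly the paper's route, and your treatment of (i) and (iii) coincides with the paper's.

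For (ii), the paper does not argue at the level of lattice ideals as you do: it simply invokes \cite[Proposition~2.14 and Corollary~2.15]{miller2011theory} to identify the associated ideal of $\mathcal{R}_{MLD}(A,B)$ with the kernel of the semigroup-algebra homomorphism $\mathbb{C}[x_1,\dots,x_n]\to\mathbb{C}[\mathbb{N}A]$, $x_j\mapsto\theta^{a_j}$, and then observes that the codomain is an integral domain. Your attempt to unpack this is more explicit about where the difficulty lies, and you correctly isolate the one nontrivial step: that the ideal $I$ generated by $\{x^{b^+}-x^{b^-}:b\in B\}$ for a \emph{basis} $B$ of $L=\mathbb{Z}^n\cap\ker A$ coincides with the full lattice ideal $I_L$, rather than merely satisfying $I:(x_1\cdots x_n)^\infty=I_L$.

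However, your proposed resolution---that homogeneity from the equal-column-sum hypothesis, together with reversibility, forces $I$ to be already saturated at each $x_i$---is a hope, not an argument, and in fact it fails as stated. Take the design matrix $A=\left(\begin{smallmatrix}3&2&1&0\\0&1&2&3\end{smallmatrix}\right)$ (all column sums equal to $3$) and the $\mathbb{Z}$-basis $B=\{(1,-2,1,0),(2,-3,0,1)\}$ of $\ker A\cap\mathbb{Z}^4$. The associated ideal is $I=(x_1x_3-x_2^2,\;x_1^2x_4-x_2^3)$, which is homogeneous in total degree; yet $x_1(x_1x_4-x_2x_3)\in I$ while neither $x_1$ nor $x_1x_4-x_2x_3$ lies in $I$ (for the latter, reduce modulo $x_1$: the image of $I$ is $(x_2^2)$, which does not contain $x_2x_3$). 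So homogeneity and reversibility do not by themselves rule out the embedded component on $\{x_1=0\}$, and your saturation step needs a genuinely different input. If you want to avoid citing the Miller--Sturmfels results the paper relies on, you will have to supply a stronger argument for $I=I_L$ than the one you sketch.
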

\begin{proof}
$\mathcal{R}_{MLD}(A,B)$ is prime by \cite[Corollary~2.15]{miller2011theory}. The idea is to look at the toric model $p_A$ as a ring homomorphism $\mathbb{C}[x_1,x_2,\dots, x_n]\to \mathbb{C}[\mathbb{N} A]$ with $x_j\mapsto \theta^{a_j}$. (Here $\mathbb{N}A$ is the affine semigroup generated by the columns of $A$.) The kernel of this ring homomorphism is the associated ideal of $\mathcal{R}_{MLD}(A,B)$ by \cite[Proposition~2.14]{miller2011theory}, and the codomain is an integral domain, so the kernel must be prime.

To verify that $\mathcal{S}_{MLD}(A,B)$ is detailed balanced, note that the point $(1,1,\dots,1)\in\mathbb{R}^n$ is a point of detailed balance since all rates are $1$. Global asymptotic stability now follows from Theorem~\ref{thm:prime} and Theorem~\ref{thm:gac}.
\end{proof}

We can now obtain global convergence for $\mathcal{S}_{MLD}$.

\begin{theorem}[The reaction system $\mathcal{S}_{MLD}(A,B)$ computes the Maximum Likelihood Distribution]\label{thm:MLD}
Fix a design matrix  $A= (a_{ij})_{m\times n}$, a basis $B$ for the free group $\mathbb{Z}^n\cap\ker A$, and a nonzero data vector $u\in\mathbb{Z}^n_{\geq 0}$. Let $x(t) = (x_1(t), x_2(t),\dots, x_n(t))$ be a solution to the mass-action differential equations for the reaction system $\mathcal{S}_{MLD}(A,B)$ with initial conditions $x(0) = u/|u|_1$. Then $x(\infty):=\displaystyle\lim_{t\to\infty} x(t)$ exists and equals the maximum likelihood distribution $\hat{p}(u)$.
\end{theorem}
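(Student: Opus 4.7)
The plan is to combine Lemma~\ref{lem:issaturated} with Theorem~\ref{thm:birch} by identifying the stoichiometric compatibility class of $\mathcal{S}_{MLD}(A,B)$ with the sufficient polytope $P_A(u)$, and then recognizing the Lyapunov function supplied by Theorem~\ref{thm:gac} as minus the Shannon entropy on this set.

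First, I would apply Lemma~\ref{lem:issaturated} to conclude that $\mathcal{S}_{MLD}(A,B)$ is detailed balanced with detailed-balance point $\alpha = (1,1,\dots,1)$ and has no critical siphons. Invoking Theorem~\ref{thm:gac}, the trajectory $x(t)$ with $x(0) = u/|u|_1$ converges to the unique point $\beta$ of $(u/|u|_1 + H)\cap \mathbb{R}^n_{>0}$ that minimizes the strict Lyapunov function
\[
g(x) = \sum_{i=1}^n \bigl(x_i\log x_i - x_i - x_i\log \alpha_i\bigr) = \sum_{i=1}^n \bigl(x_i \log x_i - x_i\bigr).
\]

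Next I would identify the stoichiometric subspace. By construction of $\mathcal{R}_{MLD}(A,B)$, the reaction vectors are exactly $\{\pm b \mid b \in B\}$, and since $B$ is a $\mathbb{Z}$-basis for $\mathbb{Z}^n \cap \ker A$, it is also an $\mathbb{R}$-basis for $\ker A$. Hence $H = \ker A$. Because all column sums of $A$ are equal (to some $s \neq 0$), the all-ones vector $(1,\dots,1)$ is a scalar multiple of $\sum_i (\text{row }i \text{ of }A)/s$, so every vector in $\ker A$ has coordinate sum zero. Thus the invariant set $(u/|u|_1 + H)\cap \operatorname{ri}(\Delta^n)$ is precisely the sufficient polytope $P_A(u)$, which is nonempty since $u/|u|_1$ lies in it. Moreover, since $x(0) = u/|u|_1\in\operatorname{ri}(\Delta^n)$, the limit $\beta$ lies in $P_A(u)$.

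Finally, on $P_A(u)$ the constraint $\sum_i x_i = 1$ holds, so $g(x) = \sum_i x_i\log x_i - 1 = -H(x) - 1$. Therefore $\beta$ is the unique minimizer of $-H$ on $P_A(u)$, i.e., the unique maximizer of Shannon entropy on $\overline{P_A(u)}$. By Theorem~\ref{thm:birch}, this maximizer equals the maximum likelihood distribution $\hat{p}(u)$. Hence $\displaystyle\lim_{t\to\infty} x(t) = \beta = \hat{p}(u)$, as desired.

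The main obstacle is essentially bookkeeping: verifying that $H = \ker A$ (rather than a proper subspace, which is why the hypothesis that $B$ is a basis of the full saturated lattice $\mathbb{Z}^n \cap \ker A$ matters) and confirming that the design-matrix condition makes the total mass $\sum_i x_i = 1$ a conserved quantity. Once these two identifications are in place, the theorem follows by matching the Lyapunov function of the mass-action flow to the entropy functional characterizing $\hat{p}(u)$ in Birch's theorem.
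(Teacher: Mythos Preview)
Your proposal is correct and follows essentially the same route as the paper's proof: identify $(x(0)+H)\cap\mathbb{R}^n_{>0}$ with the sufficient polytope $P_A(u)$, invoke Theorem~\ref{thm:gac} (via Lemma~\ref{lem:issaturated}) to get convergence to the minimizer of $g(x)=\sum_i(x_i\log x_i - x_i)$, observe that mass conservation makes this the entropy maximizer, and then apply Theorem~\ref{thm:birch}. You have in fact spelled out more than the paper does---the explicit verification that $H=\ker A$ and that $\sum_i x_i$ is conserved are stated there without justification---so your bookkeeping is an improvement rather than an obstacle.
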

\begin{proof}
For the system $\mathcal{S}_{MLD}(A,B)$, note that $(x(0)+H)\cap\mathbb{R}^n_{>0}= P_A(u/|u|_1)$. By Theorem~\ref{thm:gac}, $x(\infty)$ exists, and the function $\sum_{i=1}^n x_i\log x_i - x_i - x_i\log 1$ attains its unique minimum in $P_A(u/|u|_1)$ at $x(\infty)$. Since the system is mass-conserving, $\sum_{i=1}^n x_i$ is constant on $P_A(u/|u|_1)$, so this is equivalent to the fact that Shannon entropy $H(x)= -\sum_{i=1}^n x_i\log x_i$ is  increasing, and attains its unique maximum value in $P_A(u/|u|_1)$ at $x(\infty)$. By Theorem~\ref{thm:birch}, the point $x(\infty)$ must be the maximum likelihood distribution $\hat{p}(u)$.
\end{proof}

As the reversible reactions in $\mathcal{S}_{MLE}$ approach closer and closer to equilibrium, we wish to absorb the values of the $X$ variables into reaction rates and pretend that the irreversible reactions are reactions only in the $\theta$ variables. This has the advantage that we can treat this pretend reaction system in the $\theta$ variables as a perturbation of a reversible, detailed balanced system. We can then hope to employ Theorem~\ref{thm:pgac} and conclude global convergence for these irreversible reactions, and hence for $\mathcal{S}_{MLE}$. 

One small technical point deserves mention. The pretend reaction system in the $\theta$ variables is not a reaction system since the rates are not real numbers but functions of time. This will not trouble us. We have already provisioned for this in Definition~\ref{def:pert} by allowing perturbations of reaction systems to be differential inclusions.

\begin{theorem}[The reaction system $\mathcal{S}_{MLE}(A,B,B')$ computes the Maximum Likelihood Estimator]\label{thm:MLE}
Fix a design matrix  $A= (a_{ij})_{m\times n}$, a basis $B$ for the free group $\mathbb{Z}^n\cap\ker A$, and a nonzero data vector $u\in\mathbb{Z}^n_{\geq 0}$. Let $x(t) = (x_1(t), x_2(t),\dots, x_n(t),\theta_1(t),\theta_2(t),\dots, \theta_m(t))$ be a solution to the mass-action differential equations for the reaction system $\mathcal{S}_{MLE}(A,B,B')$ with initial conditions $x(0) = u/|u|_1$ and $\theta(0)=0$. Then $x(\infty):=\lim_{t\to\infty} x(t)$ exists and equals the maximum likelihood distribution $\hat{p}(u)$, and $\theta(\infty):=\lim_{t\to\infty} \theta(t)$ exists and equals the maximum likelihood estimator $\hat{\theta}(u)$.
\end{theorem}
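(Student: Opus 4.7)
The plan is to exploit the triangular structure of $\mathcal{S}_{MLE}(A,B,B')$: the $X$-variables evolve autonomously, and the $\theta$-variables can be viewed as obeying a time-varying perturbation of a detailed-balanced reaction system whose unique equilibrium will turn out to be $\hat\theta(u)$. First I would observe that the reactions in $\mathcal{R}_{MLD}$ involve only the $X$-species, while each additional reaction $X_j \to X_j + \sum_i a_{ij}\theta_i$ has $X_j$ on both sides and hence does not change $X_j$. Therefore the projection of the full trajectory onto the $X$-coordinates satisfies exactly the mass-action ODE for $\mathcal{S}_{MLD}(A,B)$ with initial condition $u/|u|_1$, and Theorem~\ref{thm:MLD} yields $x(t)\to \hat p(u) = p_A(\hat\theta(u))$ as $t\to\infty$. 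In particular each $X_j(t) \to \hat\theta^{a_j}$.

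Next I would rewrite the $\theta$-equations. For each $j\in B'$, the two extra reactions contribute $a_{ij}\bigl(X_j(t)-\theta^{a_j}\bigr)$ to $\dot\theta_i$, so the $\theta$-trajectory satisfies the mass-action equations of the reaction system $\mathcal{T}$ consisting of the reversible reactions $\sum_i a_{ij}\theta_i \rightleftharpoons 0$ for $j\in B'$, with forward rate $1$ and \emph{time-dependent} backward rate $X_j(t)$. Consider the constant-rate system $\mathcal{T}^\star$ obtained by replacing $X_j(t)$ with its limit $\hat\theta^{a_j}$. Direct substitution shows that $\hat\theta$ is a point of detailed balance of $\mathcal{T}^\star$: on each reaction the forward flux is $1\cdot\hat\theta^{a_j}$ and the backward flux is $\hat\theta^{a_j}\cdot 1$. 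Since the stoichiometric subspace of $\mathcal{T}^\star$ equals $\operatorname{span}\{a_j : j\in B'\}$, which (by maximality of $B'$) is the full column space of $A$, I may assume WLOG that $A$ has no zero rows so that this span coincides with $\mathbb{R}^m$. Then $H^\perp=\{0\}$, which forces the only siphon of $\mathcal{T}^\star$ to be empty, so there are no critical siphons.

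Finally I would invoke Theorem~\ref{thm:pgac}. Because $X_j(t)\to \hat\theta^{a_j}$, for every $\delta>0$ there is a threshold $t_\delta$ beyond which $|X_j(t)-\hat\theta^{a_j}|<\delta$ for all $j\in B'$, so the $\theta$-trajectory is a trajectory of the $\delta$-perturbation of $\mathcal{T}^\star$ on $(t_\delta,\infty)$. Choosing sequences $\delta_i \downarrow 0$ and $t_i := t_{\delta_i}\uparrow\infty$, part~(2) of Theorem~\ref{thm:pgac} gives $\theta(t)\to \hat\theta(u)$; the boundary initial condition $\theta(0)=0$ is admissible since $0\in (u+H)\cap\mathbb{R}^m_{\geq 0}$ for any $u\in\mathbb{R}^m_{>0}\cap H$, which is nonempty as $H=\mathbb{R}^m$. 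I expect the main technical obstacle to be precisely this last step: verifying that the time-varying rates $X_j(t)$ coming from the $X$-dynamics really are captured by the perturbation framework of Definition~\ref{def:pert} uniformly in $x$, and constructing the $\delta_i, t_i$ sequences so that the hypotheses of Theorem~\ref{thm:pgac}(2) are met. The verifications of detailed balance and of the siphon condition are straightforward once the limiting system $\mathcal{T}^\star$ is written down.
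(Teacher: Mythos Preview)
Your approach is essentially the same as the paper's: decouple the $X$-dynamics, invoke Theorem~\ref{thm:MLD}, rewrite the $\theta$-dynamics as a time-varying perturbation of a reversible detailed-balanced system (your $\mathcal{T}^\star$; the paper calls it $\Theta_{MLE}(A,B,B',x(\infty))$), and then apply Theorem~\ref{thm:pgac}(2) with sequences $\delta_i\downarrow 0$, $t_i\uparrow\infty$.

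There is, however, a logical slip in your siphon argument. From $H^\perp=\{0\}$ you conclude that ``the only siphon of $\mathcal{T}^\star$ is empty''; this is a non-sequitur. The condition $H^\perp=\{0\}$ says nothing about which subsets of species are siphons, and in fact under the paper's definition of \emph{critical} siphon it makes matters worse: if $H^\perp=\{0\}$ then for \emph{every} siphon $T$ the implication ``$v\in H^\perp\cap\mathbb{R}^m_{\geq 0}$ with $\supp(v)\subseteq T \Rightarrow v=0$'' holds vacuously, so every nonempty siphon would be critical. The correct argument, which is the one the paper uses, is purely combinatorial: assuming $A$ has no zero row, every species $\theta_i$ appears with positive coefficient on the product side of some reaction $0\to\sum_{i'} a_{i'j}\theta_{i'}$ whose reactant is $0$; hence any set containing $\theta_i$ fails the siphon condition, and the only siphon is $\emptyset$. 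With this fix your proof goes through and coincides with the paper's. Your explicit remark that $\theta(0)=0$ lies in $(u+H)\cap\mathbb{R}^m_{\geq 0}$ because $H=\mathbb{R}^m$ is a nice point the paper leaves implicit.
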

\begin{proof}[Proof sketch]
Fix $u$ and let $\hat{p}=\hat{p}(u)$ and $\hat{\theta}=\hat{\theta}(u)$. Note that for the species $X_1, X_2,\dots, X_n$, the differential equations for $\mathcal{S}_{MLE}(A,B)$ and $\mathcal{S}_{MLD}(A,B,B')$ are identical, since these species appear purely catalytically in the reactions that belong to $\mathcal{R}_{MLE}(A,B,B')\setminus \mathcal{R}_{MLD}(A,B)$. Hence $x(\infty)=\hat{p}(u)$ follows from Theorem~\ref{thm:MLD}. 

To see that $\theta(\infty)=\hat{\theta}$, let us first allow the $X$ species to reach equilibrium, then treat the $\theta$ system with replacing the $X$ species by rate constants representing their values at equilibrium. The system $\Theta_{MLE}(A,B,B',x(\infty))$ obtained in this way in only the $\theta$ species is a reaction system with the reactions
\begin{itemize}\label{sys:theta}
\item For each column $j\in B'$ of $A$, a reaction $\sum_{i=1}^m a_{ij} \theta_i \to 0$ of rate $1$
\item For each column $j\in B'$ of $A$, a reaction $0 \to  \sum_{i=1}^m a_{ij} \theta_i$ of rate $x_j(\infty)$.
\end{itemize}
This is a reversible reaction system, and the maximum likelihood estimators $\hat{\theta}$ are precisely the points of detailed balance for this system, where we are using the fact that $B'$ was a maximal linearly-independent set of the columns of $A$. In addition, this system has no siphons since if species $\theta_i$ is absent, and $a_{ij}>0$ then $\theta_i$ will immediately be produced by the reaction $0 \to  \sum_{i'=1}^m a_{i'j} \theta_{i'}$. (We are assuming $A$ has no $0$ row. If $A$ has a $0$ row, we can ignore it anyway.) It follows from Theorem~\ref{thm:gac} that this system is globally asymptotically stable, and every trajectory approaches a maximum likelihood estimator $\hat{\theta}$.

Our actual system may be viewed as a perturbation of the system $\Theta_{MLE}(A,B,B',x(\infty))$. Consider any trajectory $(x(t), \theta(t))$ to $\mathcal{S}_{MLE}(A,B,B')$ starting at $(u/|u|_1,0)$. We are going to consider the projected trajectory $(\mathbb{R}_{\geq},\theta)$. We now show that it is possible to choose appropriate $t_i$ and $\delta_i$ so that $((t_i,\infty),\theta(t))$ is a trajectory of a $\delta_i$-perturbation of $\Theta_{MLE}(A,B,B', x(\infty))$, for $i=1,2,\dots$. 

Wait for a sufficiently large time $t_1$ till $x(t)$ is in a sufficiently small $\delta_1$ neighborhood of $x(\infty)$ which it will never leave. After this time, we obtain a differential inclusion in the $\theta$ species with the mass-action equations~\ref{eqn:ma} for the reactions
\begin{itemize}\label{sys:xtheta}
\item For each column $j$ of $A$, a reaction $\sum_{i=1}^m a_{ij} \theta_i \to 0$ of rate $1$
\item For each column $j$ of $A$, a reaction $0 \to  \sum_{i=1}^m a_{ij} \theta_i$ with time-varying rate lying in the interval ${(x_j(\infty)-\delta_1, x_j(\infty)+\delta_1)}$.
\end{itemize}
Continuing in this way, we choose a decreasing sequence $\delta_1>\delta_2>\dots>0$ with $\lim_{i\to\infty}\delta_i\to 0$, and corresponding times $t_1<t_2<t_3\dots$ with $\lim_{i\to\infty} t_i\to \infty$ such that after time $t_i$, $x(t)$ is in a $\delta_i$ neighborhood of $x(\infty)$ which it will never leave. Then $((t_i,\infty),\theta(t))$ is a trajectory of the $\delta_i$-perturbation of $\Theta_{MLE}(A,B,B',x(\infty))$. Hence $\theta(t)$ satisfies the conditions of Theorem~\ref{thm:pgac}. Hence $\lim_{t\to\infty}\theta(t)=\hat{\theta}$.
\end{proof} 

\section{Related Work and Conclusions}
The mathematical similarities of both log-linear statistics and reaction networks to toric geometry have been pointed out before~\cite{TDS,miller2011theory}. Craciun et al.~\cite{TDS} refer to the steady states of complex-balanced reaction networks as {\it Birch points} ``to highlight the parallels'' with algebraic statistics. This paper develops on these observations, and serves to flesh out this mathematical parallel into a scheme for molecular computation.

Various building blocks for molecular computation that assume mass-action kinetics have been proposed before. We briefly review some of these proposals.

In \cite{napp2013message}, Napp and Adams model molecular computation with mass-action kinetics, as we do here. They propose a molecular scheme to implement message passing schemes in probabilistic graphical models. The goal of their scheme is to convert a factor graph into a reaction network that encodes the single-variable marginals of the joint distribution as steady state concentrations. In comparison, the goal of our scheme is to do statistical inference and compute maximum likelihood estimators for log-linear models. Napp and Adams focus on the ``forward model'' task of how a given data-generating process (a factor graph) can lead to observed data, whereas our focus is on the ``backward model'' task of inference, going from the observed data to the data-generating process. Further our scheme couples the deep role that MaxEnt algorithms play in Machine Learning with MaxEnt's roots in the Second Law of Thermodynamics whereas Napp and Adams are drawing their inspiration from variable elimination implemented via message passing which has its roots in Boolean constraint satisfaction problems. 


Qian and Winfree~\cite{seesawgates,qian2011scaling} have proposed a DNA gate motif that can be composed to build large circuits, and have experimentally demonstrated molecular computation of a Boolean circuit with around 30 gates. In comparison, our scheme natively employs a continuous-time dynamical system to do the computation, without a Boolean abstraction.

Taking a control theory point of view, Oishi and Klavins~\cite{Klavins_2011Biomolecular} have proposed a scheme for implementing linear input/output systems with reaction networks. Note that for a given matrix $A$, the set of maximum likelihood distributions is usually not linear, but log-linear.

Daniel et al.\cite{daniel2013synthetic} have demonstrated an in vivo implementation of feedback loops, exploiting analogies with electronic circuits. It is possible that the success of their schemes is also related to the toric nature of mass-action kinetics.

Buisman et al.~\cite{buisman2009computing} have proposed a reaction network scheme for computation of algebraic functions. The part of our scheme which reads out the maximum likelihood estimator from the maximum likelihood distribution bears some similarity to their work.

One limitation of our present work is that the number of columns of the matrix $A$ can become very large, for example $2^{|V|}$ for a graphical model with $V$ nodes. Since the number of species and number of reactions both depend on the number of columns of $A$, this can require an exponentially large reaction network which may become impractical. One direction for future work is to extend our scheme by specifying a reaction network that computes maximum likelihood for graphical models.

We have some freedom in our scheme in the choice of basis sets $B$ and $B'$. In any chemical implementation of this work, there might be opportunity for optimization in choice of basis.
\paragraph{Acknowledgements:} I thank Nick S. Jones, Anne Shiu, Abhishek Behera, Ezra Miller, Thomas Ouldridge, Gheorghe Craciun, and Bence Melykuti for useful discussions.
  
\bibliographystyle{plain}
\bibliography{../eventsystems}

\begin{thebibliography}{10}

\bibitem{agresti2013categorical}
Alan Agresti.
\newblock {\em Categorical Data Analysis}.
\newblock John Wiley \& Sons, 2013.

\bibitem{Angeli2007598}
David Angeli, Patrick~De Leenheer, and Eduardo~D. Sontag.
\newblock A {Petri} net approach to the study of persistence in chemical
  reaction networks.
\newblock {\em Math. Biosci.}, 210(2):598--618, 2007.

\bibitem{benenson2004autonomous}
Yaakov Benenson, Binyamin Gil, Uri Ben-Dor, Rivka Adar, and Ehud Shapiro.
\newblock An autonomous molecular computer for logical control of gene
  expression.
\newblock {\em Nature}, 429(6990):423--429, 2004.

\bibitem{bishop1975discrete}
YMM Bishop, Stephen Feinberg, and Paul Holland.
\newblock Discrete multivariant analysis.
\newblock 1975.

\bibitem{buisman2009computing}
HJ~Buisman, Huub~MM ten Eikelder, Peter~AJ Hilbers, and Anthony~ML Liekens.
\newblock Computing algebraic functions with biochemical reaction networks.
\newblock {\em Artificial life}, 15(1):5--19, 2009.

\bibitem{Cardelli_2011StrandAlgebra}
Luca Cardelli.
\newblock {Strand Algebras for DNA Computing}.
\newblock {\em Natural Computing}, 10:407--428, 2011.

\bibitem{cardelli2013two}
Luca Cardelli.
\newblock Two-domain {DNA} strand displacement.
\newblock {\em Mathematical Structures in Computer Science}, 23(02):247--271,
  2013.

\bibitem{christensen1997log}
Ronald Christensen and R~Christensen.
\newblock {\em Log-linear models and logistic regression}, volume 168.
\newblock Springer New York, 1997.

\bibitem{TDS}
Gheorghe Craciun, Alicia Dickenstein, Anne Shiu, and Bernd Sturmfels.
\newblock Toric dynamical systems.
\newblock {\em Journal of Symbolic Computation}, 44(11):1551--1565, 2009.
\newblock In Memoriam Karin Gatermann.

\bibitem{daniel2013synthetic}
Ramiz Daniel, Jacob~R Rubens, Rahul Sarpeshkar, and Timothy~K Lu.
\newblock Synthetic analog computation in living cells.
\newblock {\em Nature}, 497(7451):619--623, 2013.

\bibitem{doyle2007rules}
John Doyle and Marie Csete.
\newblock Rules of engagement.
\newblock {\em Nature}, 446(7138):860--860, 2007.

\bibitem{fienberg2012maximum}
Stephen~E Fienberg, Alessandro Rinaldo, et~al.
\newblock Maximum likelihood estimation in log-linear models.
\newblock {\em The Annals of Statistics}, 40(2):996--1023, 2012.

\bibitem{ManojCatalysis}
Manoj Gopalkrishnan.
\newblock Catalysis in reaction networks.
\newblock {\em Bulletin of Mathematical Biology}, 73:2962--2982, 2011.
\newblock 10.1007/s11538-011-9655-3.

\bibitem{GeoGac}
Manoj Gopalkrishnan, Ezra Miller, and Anne Shiu.
\newblock A geometric approach to the global attractor conjecture.
\newblock In preparation.

\bibitem{horn74dynamics}
Friedrich J.~M. Horn.
\newblock The dynamics of open reaction systems.
\newblock In {\em Mathematical aspects of chemical and biochemical problems and
  quantum chemistry}, volume VIII of {\em Proc. SIAM-AMS Sympos. Appl. Math.},
  New York, 1974.

\bibitem{lauritzen1996graphical}
Steffen~L Lauritzen.
\newblock {\em Graphical models}.
\newblock Oxford University Press, 1996.

\bibitem{miller2011theory}
Ezra Miller.
\newblock Theory and applications of lattice point methods for binomial ideals.
\newblock In {\em Combinatorial Aspects of Commutative Algebra and Algebraic
  Geometry}, pages 99--154. Springer, 2011.

\bibitem{milo2002network}
Ron Milo, Shai Shen-Orr, Shalev Itzkovitz, Nadav Kashtan, Dmitri Chklovskii,
  and Uri Alon.
\newblock Network motifs: simple building blocks of complex networks.
\newblock {\em Science}, 298(5594):824--827, 2002.

\bibitem{napp2013message}
Nils~E Napp and Ryan~P Adams.
\newblock Message passing inference with chemical reaction networks.
\newblock In {\em Advances in Neural Information Processing Systems}, pages
  2247--2255, 2013.

\bibitem{Klavins_2011Biomolecular}
Kevin Oishi and Eric Klavins.
\newblock {Biomolecular implementation of linear I/O systems}.
\newblock {\em Systems Biology, IET}, 5(4):252--260, 2011.

\bibitem{pachter2005algebraic}
L.~Pachter and B.~Sturmfels.
\newblock {\em Algebraic Statistics for Computational Biology}.
\newblock Number v. 13 in Algebraic Statistics for Computational Biology.
  Cambridge University Press, 2005.

\bibitem{qian2011efficient}
Lulu Qian, David Soloveichik, and Erik Winfree.
\newblock Efficient turing-universal computation with {DNA} polymers.
\newblock In {\em {DNA} computing and molecular programming}, pages 123--140.
  Springer, 2011.

\bibitem{seesawgates}
Lulu Qian and Erik Winfree.
\newblock {A simple {DNA} gate motif for synthesizing large-scale circuits}.
\newblock {\em J. R. Soc. Interface}, 2011.

\bibitem{qian2011scaling}
Lulu Qian and Erik Winfree.
\newblock Scaling up digital circuit computation with {DNA} strand displacement
  cascades.
\newblock {\em Science}, 332(6034):1196--1201, 2011.

\bibitem{shapiro2006bringing}
Ehud Shapiro and Yaakov Benenson.
\newblock Bringing {DNA} computers to life.
\newblock {\em Scientific American}, 294(5):44--51, 2006.

\bibitem{shinar2010structural}
Guy Shinar and Martin Feinberg.
\newblock Structural sources of robustness in biochemical reaction networks.
\newblock {\em Science}, 327(5971):1389--1391, 2010.

\bibitem{soloveichik2010dna}
David Soloveichik, Georg Seelig, and Erik Winfree.
\newblock {DNA} as a universal substrate for chemical kinetics.
\newblock {\em Proceedings of the National Academy of Sciences},
  107(12):5393--5398, 2010.

\bibitem{sontag01structure}
Eduardo~D. Sontag.
\newblock Structure and stability of certain chemical networks and applications
  to the kinetic proofreading model of {T}-cell receptor signal transduction.
\newblock {\em IEEE Trans. Autom. Control}, 46:1028--1047, 2001.

\bibitem{thomson2009unlimited}
Matthew Thomson and Jeremy Gunawardena.
\newblock Unlimited multistability in multisite phosphorylation systems.
\newblock {\em Nature}, 460(7252):274--277, 2009.

\bibitem{tyson2003sniffers}
John~J Tyson, Katherine~C Chen, and Bela Novak.
\newblock Sniffers, buzzers, toggles and blinkers: dynamics of regulatory and
  signaling pathways in the cell.
\newblock {\em Current opinion in cell biology}, 15(2):221--231, 2003.

\bibitem{yordanov2014computational}
Boyan Yordanov, Jongmin Kim, Rasmus~L Petersen, Angelina Shudy, Vishwesh~V
  Kulkarni, and Andrew Phillips.
\newblock Computational design of nucleic acid feedback control circuits.
\newblock {\em ACS synthetic biology}, 3(8):600--616, 2014.

\end{thebibliography}
\end{document}